\newcommand{\tuple}[1]{\ensuremath{\langle {#1} \rangle}}
\newcommand{\notes}[1]{}
 \theoremstyle{definition}
 \newtheorem{definition}{Definition}
\theoremstyle{plain}
\newtheorem{theorem}{Theorem}
\newtheorem{lemma}{Lemma}
\newtheorem{corollary}{Corollary}
\newcommand{\ith}[1]{\ensuremath{i^{{th}}}}
\def\permdot#1#2{
\permx=#1 \advance\permx by-1
\permy=#2 \advance\permy by-1
\psframe[fillcolor=black, fillstyle=solid]
(\permx,\permy)(#1, #2)
}
\newcommand{\x}[1]{\ensuremath{x_{#1}\xspace}}
\newcommand{\bracket}[3]{\ensuremath{_{\text{#2}}{\text{#1}}_{\text{#3}}\xspace}}
\newcommand{\mbracket}[3]{\ensuremath{\!\ _{{#2}}{{#1}}_{{#3}}\xspace}}
\newcommand{\boxnum}[1]{{\setlength{\fboxsep}{1pt}\raisebox{1pt}{\hspace{1pt}\fbox{\tiny #1}\hspace{1pt}}}}
\newcommand{\ind}[1]{\ensuremath{_{\kern-0.5pt\boxnum{#1}}}}
\newcommand{\oracle}{\ensuremath{\mathit{oracle}\xspace}}
\def\namecite{\newcite}
\newcommand{\Fone}{\ensuremath{\mathrm{F}_1}\xspace}
\newcommand{\smallnt}[1]{\ensuremath{_{\mbox{\tiny PP}}\xspace}}
\newcommand{\shift}{\ensuremath{\mathsf{sh}\xspace}}
\newcommand{\combine}{\ensuremath{\mathsf{comb}\xspace}}
\newcommand{\labelx}{\ensuremath{\mathsf{label}\xspace}}
\newcommand{\nolabel}{\ensuremath{\mathsf{nolabel}\xspace}}
\newcommand{\pseudocode}{Algorithm}
\newcommand{\nitem}[3]{\ensuremath{\tuple{{#1},\; {#2},\; {#3}}\xspace}}
\newcommand{\nextb}{\ensuremath{\mathit{next}\xspace}}
\newcommand{\dyna}{\ensuremath{\mathit{dyna}\xspace}}
\newcommand{\reach}{\ensuremath{\mathit{reach}\xspace}}
\newcommand{\tG}{\ensuremath{t_\mathrm{G}\xspace}}
\newcommand{\desc}{\ensuremath{\mathcal{D}\xspace}}
\newcommand{\tstar}{\ensuremath{t^*\xspace}}
\newcommand{\tstarc}{\ensuremath{\tstar(c)\xspace}}
\newcommand{\leftr}{\ensuremath{\mathit{left}\xspace}}
\newcommand{\rightr}{\ensuremath{\mathit{right}\xspace}}
\newcommand{\newmid}{\!\mid\!}
\newcommand*\trapezoid{\includegraphics{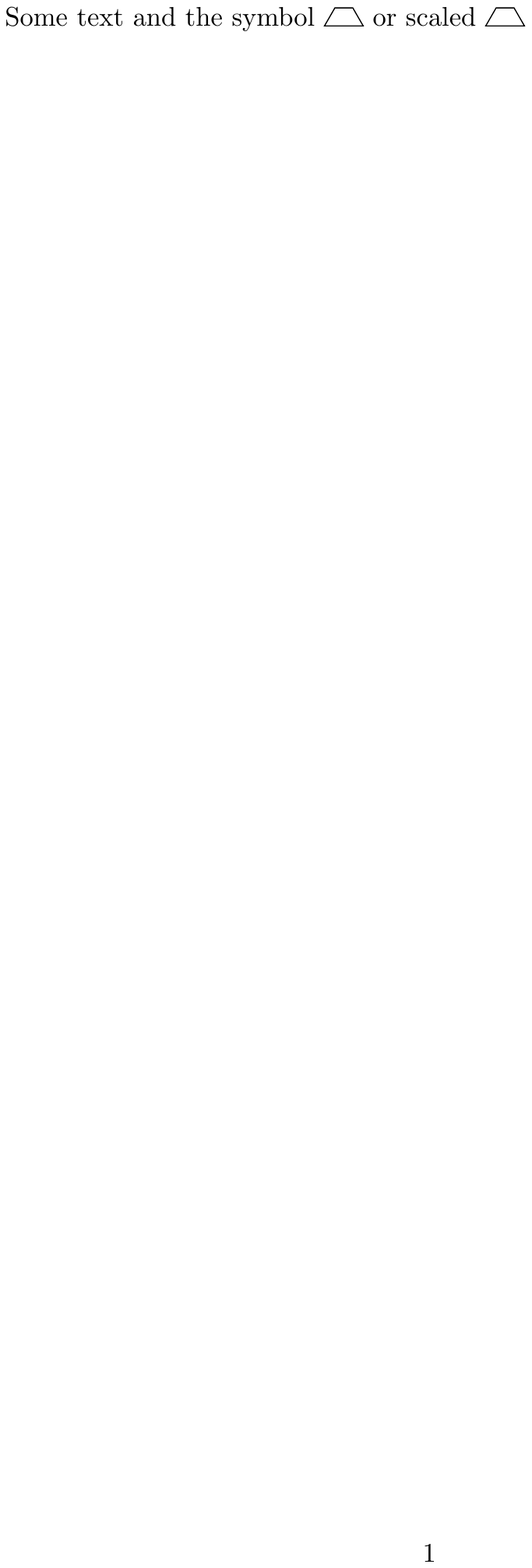}}
\newcommand*\redtrapezoid{\includegraphics{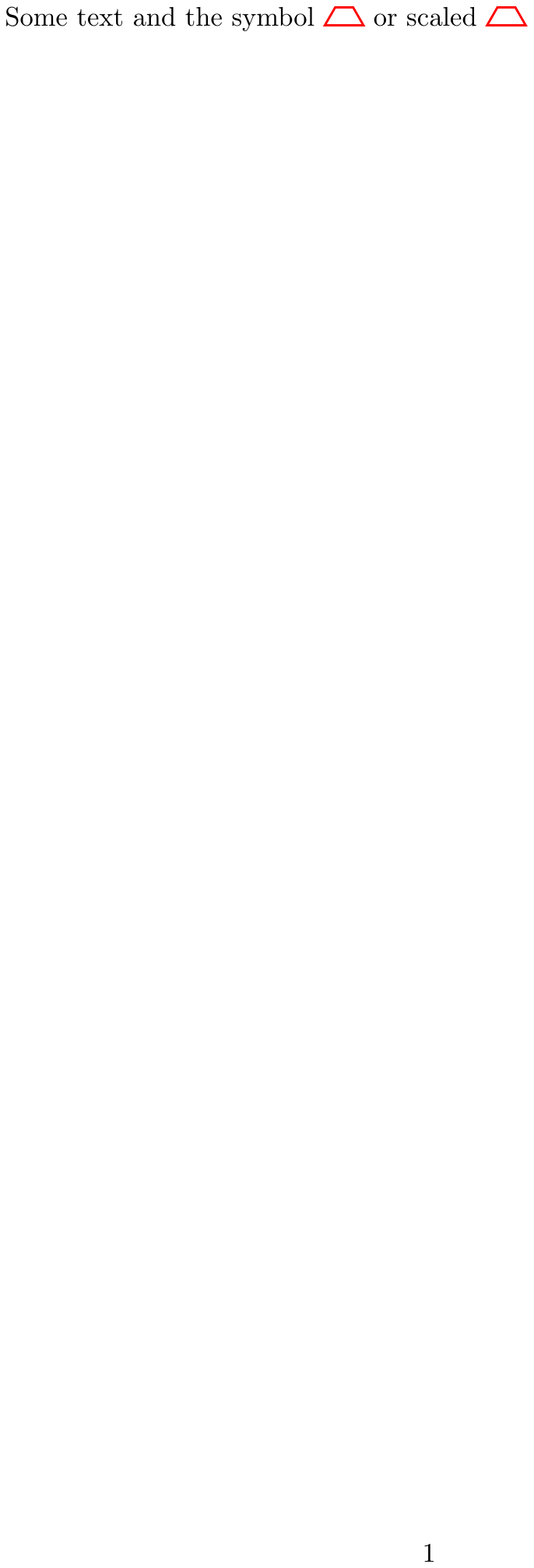}}
\title{Span-Based Constituency Parsing with a Structure-Label System and 
Provably Optimal Dynamic Oracles}
\author{James Cross \and Liang Huang\\
        School of EECS, Oregon State University, Corvallis, OR, USA\\
        {\small \tt \{james.henry.cross.iii, liang.huang.sh\}@gmail.com}
    }
\date{}
\begin{document}

\maketitle

\begin{abstract}
Parsing accuracy using efficient 
greedy transition systems 
has improved dramatically in recent years thanks 
to neural networks. 
Despite striking results in dependency parsing, however, 
neural models have not surpassed state-of-the-art approaches in constituency parsing. 
To remedy this, we introduce a new shift-reduce system 
whose stack contains merely sentence spans, represented by a bare minimum of LSTM features. 
We also design the first provably optimal dynamic oracle for constituency parsing,
which runs in amortized $O(1)$ time, 
compared to $O(n^3)$ oracles for standard dependency parsing.
Training with this oracle, we achieve the best \Fone scores on 
both English and French 
of any parser that does not use reranking or external data.
\end{abstract}

\section{Introduction}

Parsing is an important problem in natural language processing 
which has been studied extensively for decades. 
Between the two basic paradigms of parsing,
constituency parsing, the subject of this paper,
has in general proved to be the more difficult than dependency parsing, 
both in terms of accuracy and the run time of parsing algorithms.


There has recently been a huge surge of interest in using neural networks to make parsing decisions, and such models continue to dominate the state of the art in dependency parsing \cite{andor2016globally}. In constituency parsing, however, 
neural approaches are still behind the state-of-the-art \cite{carreras2008tag,shindo2012bayesian,thang2015};
see more details in Section~\ref{sec:related}.

To remedy this, we design a new parsing framework
that is more suitable for constituency parsing, and
that can be accurately modeled by neural networks.
Observing that constituency parsing is primarily focused on sentence spans 
(rather than individual words, as is dependency parsing), 
we propose a novel adaptation of the shift-reduce system 
which reflects this focus. 
In this system, the stack consists of sentence spans rather than partial trees. It is also factored into two types of parser actions, structural and label actions, which alternate during a parse. The structural actions are a simplified analogue of shift-reduce actions, omitting the directionality of reduce actions, while the label actions directly assign nonterminal symbols to sentence spans.  


Our neural model processes the sentence once for each parse with a recurrent network. We represent parser configurations with a very small number of span features (4 for structural actions and 3 for label actions). Extending \namecite{wang2016}, each span is represented as the difference of recurrent output from multiple layers in each direction. No pretrained embeddings are required.


We also extend the idea of dynamic oracles from dependency to constituency parsing. 
The latter is significantly more difficult than the former due to \Fone being a combination of precision and recall \cite{huang:2008},
and yet we propose a simple and extremely efficient oracle (amortized $O(1)$ time). 
This oracle is proved optimal for \Fone as well as both of its components, precision and recall. Trained with this oracle, our parser achieves what we believe to be the best results for any parser without reranking which was trained only on the Penn Treebank and the French Treebank, despite the fact that it is not only linear-time, but also strictly greedy.

We make the following main contributions:

\begin{itemize}
\item A novel factored transition parsing system where the stack elements are sentence spans rather than partial trees (Section~\ref{sec:parser}).
\item A neural model where sentence spans are represented as differences of output from a multi-layer bi-directional LSTM (Section~\ref{sec:lstm}).
\item The first provably optimal dynamic oracle for constituency parsing 
which is also extremely efficient (amortized $O(1)$ time) (Section~\ref{sec:dyna}).
\item The best \Fone scores of any single-model, closed training set, parser
for English and French.
\end{itemize}

We are also publicly releasing the source code for one implementation of our parser.\footnote{\scriptsize code: \tt https://github.com/jhcross/span-parser}

\section{Parsing System}
\label{sec:parser}

We present a new transition-based system for constituency parsing whose fundamental unit of computation is the sentence span. It uses a stack in a similar manner to other transition systems, except that the stack contains sentence spans with no requirement that each one correspond to a partial tree structure during a parse. 

The parser alternates between two types of actions, structural and label, where the structural actions follow a path to make the stack spans correspond to sentence phrases in a bottom-up manner, while the label actions optionally create tree brackets for the top span on the stack. There are only two structural actions: {\em shift} is the same as other transition systems, while {\em combine} merges the top two sentence spans. The latter is analogous to a reduce action, but it does not immediately create a tree structure and is non-directional. Label actions do create a partial tree on top of the stack by assigning one or more non-terminals to the topmost span.

Except for the use of spans, this factored approach is similar to the odd-even parser from \namecite{mi+huang:2015}. The fact that stack elements do not have to be tree-structured, however, means that we can create productions with arbitrary arity, and no binarization is required either for training or parsing. This also allows us to remove the directionality inherent in the shift-reduce system, which is at best an imperfect fit for constituency parsing. We do follow the practice in that system of labeling unary chains of non-terminals with a single action, which means our parser uses a fixed number of steps, $(4n-2)$ for a sentence of $n$ words.

\begin{figure}[t]

\centering
\hspace{-0.15in}
  \begin{tabular}{ll}
   input:      & $w_0 \ldots w_{n-1}$ \\[0.06in]
    axiom:        & \nitem{0}{[0]}{\emptyset} \\ [0.06in]

    goal:         & \nitem{2(2n-1)}{[0,n]}{t}  \\ [0.06in]

    {\shift}       & \inferrule{\nitem{z}{\sigma\newmid j}{t}}{\nitem{z+1}{\sigma \newmid j \newmid j\!+\!1}{t}} \ $j<n$, even $z$  \\[0.2in]

    {\combine}     & \inferrule{\nitem{z}{\sigma \newmid i \newmid k \newmid j}{t}}
                             {\nitem{z+1}{\sigma \newmid i \newmid j}{t}} \ even $z$ \\[0.2in]

    {\labelx-$X$}     & \inferrule{\nitem{z}{\sigma \newmid i \newmid j}{t}}
                              {\nitem{z+1}{\sigma \newmid i \newmid j}{t\cup\{\mbracket{X}{i}{j}\}}} \ odd $z$ \\[0.2in]

    {\nolabel}     & \inferrule{\nitem{z}{\sigma \newmid i \newmid j}{t}}
                              {\nitem{z+1}{\sigma \newmid i \newmid j}{t}} \ $z\!<\!(4n\!-\!1)$, odd $z$ \\[0.2in]


  \end{tabular}
\caption{Deductive system for the Structure/Label transition parser. The stack $\sigma$ is represented as a list of integers where the span defined by each consecutive pair of elements is a sentence segment on the stack. Each $X$ is a nonterminal symbol or an ordered unary chain. The set $t$ contains labeled spans of the form \bracket{$X$}{$i$}{$j$}, which at the end of a parse, fully define a parse tree.}
\label{fig:deductive}
\end{figure}


\newlength{\mylength}
\newlength{\myheight}
\newcommand{\atr}[1]{\settowidth{\mylength}{{#1}}\resizebox{!}{!}{$\triangle$}}

\newcommand{\ntr}[1]{\settowidth{\mylength}{\,{#1}\,}\resizebox{\mylength}{0.18cm}{$\triangle$}}
\newcommand{\rtr}[1]{\settowidth{\mylength}{\,{#1}\,}\resizebox{\mylength}{0.18cm}{{\color{red}$\triangle$}}}
\newcommand{\mtr}[2]{\settowidth{\mylength}{\,{#1}\,}\settowidth{\myheight}{{#2}}\resizebox{0.8\mylength}{0.3\mylength}{$\wedge$}}
\newcommand{\mmtr}[2]{\settowidth{\mylength}{\,{#1}\,}\settowidth{\myheight}{{#2}}\resizebox{0.8\mylength}{0.2\mylength}{$\triangle$}}
\newcommand{\bmtr}[2]{{\color{blue}\mtr{#1}{#2}}}
\newcommand{\bbmtr}[2]{{\color{black}\mtr{#1}{#2}}}
\newcommand{\brmtr}[2]{{\color{black}\mmtr{#1}{#2}}}
\newcommand{\rmtr}[2]{{\color{red}\mmtr{#1}{#2}}}

\newcommand{\tr}[2]{\settowidth{\mylength}{\,{#1}\,}\settowidth{\myheight}{{#2}}\resizebox{\mylength}{1.2\myheight}{{\color{blue}$\wedge$}}}

\newcommand{\pindex}[1]{{\scriptsize\!\!\!\!\!$_{#1}$\ }}
\newcommand{\bindex}[1]{{\scriptsize\ $_{#1}$\!\!\!\!\!}}

\begin{figure*}[t]
\hspace{-0.5cm}
\begin{tabular}{cc}
\tikzset{sibling distance=0cm, level distance=0.75cm}
\raisebox{2.2cm}{
\Tree[.S [.NP [.PRP \pindex{0}I ] ] 
         [.VP [.MD \pindex{1}do ] [.VBP \pindex{2}like ] 
         [.S [.VP [.VBG \pindex{3}eating ] [.NP [.NN \pindex{4}fish\bindex{5} ] ] ] ] ] 
     ]
}
&
\resizebox{!}{2.4cm}{
\begin{tabular}{l|ll|l|p{1.7cm}}
\hline
\!\!steps & structural action & \!\!\!label action\!\! & stack after & bracket\!\!\! \\
\hline
\!\!1--2  & \shift(I/PRP) & \!\!\!\labelx-NP & $_0\scalebox{0.8}{\trapezoid}_1$ & \bracket{NP}{0}{1}\\
\!\!3--4  & \shift(do/MD)& \!\!\!\nolabel & $_0\scalebox{0.8}{\trapezoid}_1\scalebox{0.8}{\trapezoid}_2$ & \\
\!\!5--6  & \shift(like/VBP)& \!\!\!\nolabel & $_0\scalebox{0.8}{\trapezoid}_1\scalebox{0.8}{\trapezoid}_2\scalebox{0.8}{\trapezoid}_3$ & \\
\!\!7--8  & \combine & \!\!\!\nolabel & $_0\scalebox{0.8}{\trapezoid}_1\scalebox{0.8}{\trapezoid}_3$ &  \\
\!\!9--10 & \shift(eating/VBG)& \!\!\!\nolabel & $_0\scalebox{0.8}{\trapezoid}_1\scalebox{0.8}{\trapezoid}_3\scalebox{0.8}{\trapezoid}_4$ & \\
\!\!11--12 & \shift(fish/NN)& \!\!\!\labelx-NP & $_0\scalebox{0.8}{\trapezoid}_1\scalebox{0.8}{\trapezoid}_3\scalebox{0.8}{\trapezoid}_4\scalebox{0.8}{\trapezoid}_5$ & \bracket{NP}{4}{5}\\
\!\!13--14 & \combine& \!\!\!\labelx-S-VP & $_0\scalebox{0.8}{\trapezoid}_1\scalebox{0.8}{\trapezoid}_3\scalebox{0.8}{\trapezoid}_5$ & \bracket{S}{3}{5}, \bracket{VP}{3}{5}  \\
\!\!15--16  & \combine& \!\!\!\labelx-VP & $_0\scalebox{0.8}{\trapezoid}_1\scalebox{0.8}{\trapezoid}_5$ & \bracket{VP}{1}{5}\\
\!\!17--18  & \combine& \!\!\!\labelx-S & $_0\scalebox{0.8}{\trapezoid}_5$ & \bracket{S}{0}{5} \\
\hline
\end{tabular}
}\\
(a) gold parse tree &
(b) static oracle actions\\[0.2cm]
\end{tabular}
\caption{The running example. 
It contains one ternary branch and one unary chain (S-VP),
and NP-PRP-I and NP-NN-fish are {\em not} unary chains in our system.
Each stack is just a list of numbers but is visualized with spans here.
}
\label{fig:parse}
\end{figure*}

Figure~\ref{fig:deductive} shows the formal deductive system for this parser. The stack $\sigma$ is modeled as a list of strictly increasing integers whose first element is always zero. These numbers are word boundaries which define the spans on the stack. In a slight abuse of notation, however, we sometimes think of it as a list of pairs $(i,j)$, which are the actual sentence spans, i.e., every consecutive pair of indices on the stack, initially empty. We represent stack spans by trapezoids ($_i\trapezoid_j$) in the figures to emphasize that they may or not have tree stucture.

The parser alternates between structural actions and label actions according to the parity of the parser step $z$. In even steps, it takes a structural action, either combining the top two stack spans, which requires at least two spans on the stack, or introducing a new span of unit length, as long as the entire sentence is not already represented on the stack

In odd steps, the parser takes a label action. One possibility is labeling the top span on the stack, $(i,j)$ with either a nonterminal label or an ordered unary chain (since the parser has only one opportunity to label any given span). Taking no action, designated \nolabel, is also a possibility. This is essentially a null operation except that it returns the parser to an even step, and this action reflects the decision that $(i,j)$ is not a (complete) labeled phrase in the tree. In the final step, $(4n-2)$, \nolabel\  is not allowed since the parser must produce a tree.


Figure~\ref{fig:parse} shows a complete example of applying this parsing system to a very short sentence ({\em ``I do like eating fish''}) that we will use throughout this section and the next. The action in step 2 is \labelx-NP because ``I'' is a one-word noun phrase (parts of speech are taken as input to our parser, though it could easily be adapted to include POS tagging in label actions). If a single word is not a complete phrase (e.g., ``do''), then the action after a shift is \nolabel.

The ternary branch in this tree (VP $\rightarrow$ MD VBP S) is produced by our parser in a straightforward manner: after the phrase ``do like'' is combined in step 7, no label is assigned in step 8, successfully delaying the creation of a bracket until the verb phrase is fully formed on the stack. Note also that the unary production in the tree is created with a single action, \labelx-S-VP, in step 14.

The static oracle to train this parser simply consists of taking actions to generate the gold tree with a ``short-stack'' heuristic, meaning combine first whenever combine and shift are both possible.






\section{LSTM Span Features}
\label{sec:lstm}

Long short-term memory networks (LSTM) are a type of recurrent neural network model proposed by \namecite{hochreiter1997long} which are very effective for modeling sequences. They are able to capture and generalize from interactions among their sequential inputs even when separated by a long distance, and thus are a natural fit for analyzing natural language. LSTM models have proved to be a powerful tool for many learning tasks in natural language, such as language modeling \cite{sundermeyer2012} and translation \cite{sutskever2014sequence}.

LSTMs have also been incorporated into parsing in a variety of ways, such as directly encoding an entire sentence \cite{vinyals2015}, separately modeling the stack, buffer, and action history \cite{dyer2015transition}, to encode words based on their character forms \cite{ballesteros2015improved}, and as an element in a recursive structure to combine dependency subtrees with their left and right children \cite{kiperwasser2016easy}.

For our parsing system, however, we need a way to model arbitrary sentence spans in the context of the rest of the sentence. We do this by representing each sentence span as the elementwise difference of the vector outputs of the LSTM outputs at different time steps, which correspond to word boundaries. If the sequential output of the recurrent network for the sentence is $f_0,...,f_n$ in the forward direction and $b_n,...,b_0$ in the backward direction then the span $(i,j)$ would be represented as the concatenation of the vector differences $(f_j - f_i)$ and $(b_i - b_j)$.  

The spans are represented using output from both backward and forward LSTM components, as can be seen in Figure~\ref{fig:span_lstm_diagram}. This is essentially the LSTM-Minus feature representation described by \namecite{wang2016} extended to the bi-directional case. In initial experiments, we found that there was essentially no difference in performance between using the difference features and concatenating all endpoint vectors, but our approach is almost twice as fast.

\begin{figure}[t]
\centering
\resizebox{0.49\textwidth}{!}{\begin{tikzpicture}[shorten >=1pt,->,draw=black!50, node distance=\layersep
    \tikzstyle{every node}=[font=\Large]
    \tikzstyle{every pin edge}=[<-,shorten <=1pt]
    \tikzstyle{neuron}=[circle,fill=black!25,minimum size=15pt,inner sep=0pt]
    \tikzstyle{input neuron}=[neuron, fill=green!50];
    \tikzstyle{output neuron}=[neuron, fill=red!50];
    \tikzstyle{hidden neuron}=[neuron, fill=blue!50];
    \tikzstyle{annot} = [font=\Large, text centered];

    \tikzstyle{forward lstm}=[neuron, fill=red!50];
    \tikzstyle{backward lstm}=[neuron, fill=blue!50];

    \fill [gray!25] (9.15, -6) rectangle (12.75, -8);

    \node[annot] (W-0) at (2, -7) {$\langle s \rangle$};
    \node[annot] (W-1) at (4, -7) {I};
    \node[annot] (W-2) at (6, -7) {do};
    \node[annot] (W-3) at (8, -7) {like};
    \node[annot] (W-4) at (10, -7) {\bf eating};
    \node[annot] (W-5) at (12, -7) {\bf fish};
    \node[annot] (W-6) at (14, -7) {$\langle /s \rangle$};

    \foreach \name / \x in {0,...,5}
        {
        \node[annot] (X-\name) at (2*\x+3, -7) {$_{\x}$};
        \node[forward lstm] (F-\name) at (2*\x+2,-5) {};
        \path[red, thick] (W-\name) edge (F-\name);
        \node[annot] (FH-\name) at (2*\x+3,-4) {$\mathbf{f_\name}$};
        \path[red, thick] (F-\name) edge (FH-\name);

        \node[annot] (BH-\name) at (2*\x+3,-10) {$\mathbf{b_\name}$};
        }

    \foreach \name / \x in {1,...,6}
        {
        \node[backward lstm] (B-\name) at (2*\x+2,-9) {};
        \path[blue, thick] (W-\name) edge (B-\name);
        }

    \foreach \left / \right in {0/1,1/2,2/3,3/4,4/5}
        {
        \path[red, thick] (F-\left) edge (F-\right);
        }

    \foreach \left / \right in {0/1,1/2,2/3,3/4,4/5, 5/6}
        {
        \path[blue, thick] (B-\right) edge (BH-\left);
        }

    \foreach \left / \right in {1/2,2/3,3/4,4/5,5/6}   
        {
        \path[blue, thick] (B-\right) edge (B-\left);
        }     

    \path (FH-3) edge[->, black, very thick, dashed, bend left] (FH-5);
    \path (BH-5) edge[->, black, very thick, dashed, bend left] (BH-3);

\end{tikzpicture}}
\caption{Word spans are modeled by differences in LSTM output. Here the span 
{\em $_3$ eating fish $_5$} is represented by the vector differences (${\mathbf{f_5}-\mathbf{f_3}}$) and (${\mathbf{b_3}-\mathbf{b_5}}$). The forward difference corresponds to LSTM-Minus \protect\cite{wang2016}.}
\label{fig:span_lstm_diagram}
\end{figure}
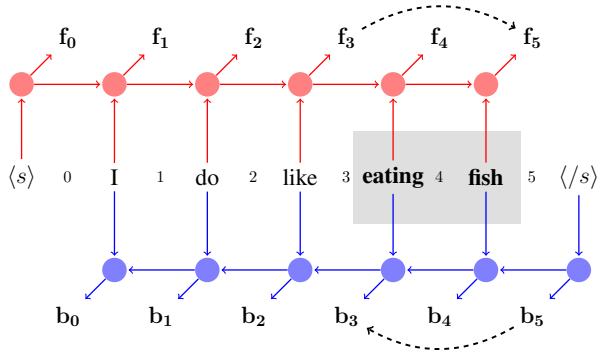



This model allows a sentence to be processed once, and then the same recurrent outputs can be used to compute span features throughout the parse. Intuitively, this allows the span differences to learn to represent the sentence spans in the context of the rest of the sentence, not in isolation (especially true for LSTM given the extra hidden recurrent connection, typically described as a ``memory cell''). In practice, we use a two-layer bi-directional LSTM, where the input to the second layer combines the forward and backward outputs from the first layer at that time step. For each direction, the components from the first and second layers are concatenated to form the vectors which go into the span features. See \namecite{cross2016acl} for more details on this approach.

For the particular case of our transition constituency parser, we use only four span features to determine a structural action, and three to determine a label action, in each case partitioning the sentence exactly. The reason for this is straightforward: when considering a structural action, the top two spans on the stack must be considered to determine whether they should be combined, while for a label action, only the top span on the stack is important, since that is the candidate for labeling. In both cases the remaining sentence prefix and suffix are also included. These features are shown in Table~\ref{table:features}.

The input to the recurrent network at each time step consists of vector embeddings for each word and its part-of-speech tag. Parts of speech are predicted beforehand and taken as input to the parser, as in much recent work in parsing. In our experiments, the embeddings are randomly initialized and learned from scratch together with all other network weights, and we would expect further performance improvement from incorporating embeddings pre-trained from a large external corpus.

The network structure after the the span features consists of a separate multilayer perceptron for each type of action (structural and label). For each action we use a single hidden layer with rectified linear (ReLU) activation. The model is trained on a per-action basis using a single correct action for each parser state, with a negative log softmax loss function, as in \namecite{chen2014fast}.

\newcommand{\hhbox}{\resizebox{1.1cm}{0.18cm}{\text{$\Box$}}}
\begin{table}[t]
\centering
\resizebox{.48\textwidth}{!}{
\begin{tabular}{l|l|l}
   \hline
   Action   &  Stack  &  LSTM Span Features  \\
   \hline
   Structural   &  $\sigma\newmid i \newmid k \newmid j$ & $_0 \hhbox _i \trapezoid _k \trapezoid _j \hhbox _n$
 \!\!\!\!\! \\
   Label        &  $\sigma\newmid i \newmid j$ & $_0 \hhbox _i \scalebox{2.25}[1]{\trapezoid} _j \hhbox _n$ \!\!\!\!\! \\
    \hline
\end{tabular}
}
\caption{Features used for the parser. No label or tree-structure features are required.}
\label{table:features}
\end{table}

\section{Dynamic Oracle}
\label{sec:dyna}
The baseline method of training our parser is what is known as a static oracle: we simply generate the sequence of actions to correctly parse each training sentence, using a short-stack heuristic (i.e., combine first whenever there is a choice of shift and combine). This method suffers from a well-documeted problem, however, namely that it only ``prepares" the model for the situation where no mistakes have been made during parsing, an inevitably incorrect assumption in practice. 
To alleviate this problem, \namecite{goldberg2013training} define a dynamic oracle 
to return the best possible action(s) at any arbitrary configuration. 

In this section, we introduce an easy-to-compute optimal dynamic oracle for our constituency parser. 
We will first define some concepts upon which the dynamic oracle is built and then show how optimal actions can be very efficiently computed using this framework. In broad strokes, in any arbitrary parser configuration $c$ there is a set of brackets \tstarc\ from the gold tree which it is still possible to reach. By following dynamic oracle actions, all of those brackets and only those brackets will be predicted.

Even though proving the optimality of our dynamic oracle (Sec.~\ref{sec:correctness}) is involved,
computing the oracle actions is extremely simple (Secs.~\ref{sec:evenodd}) and efficient 
(Sec.~\ref{sec:complexity}).

\subsection{Preliminaries and Notations}

Before describing the computation of our dynamic oracle, 
we first need to rigorously establish the desired optimality of dynamic oracle.
The structure of this framework follows \namecite{goldberg+:2014}.

\begin{definition}
We denote $c\vdash_\tau c'$ iff. $c'$ is the result of action $\tau$ on configuration $c$,
also denoted functionally as $c'=\tau(c)$.
We denote $\vdash$ to be the union of $\vdash_\tau$ for all actions $\tau$, 
and $\vdash^*$ to be the reflexive and transitive closure of $\vdash$.
\end{definition}

\begin{definition}[descendant/reachable trees]
We denote~$\desc(c)$ to be the set of final descendant trees derivable from~$c$,
i.e., $\desc(c) = \{ t \mid c \vdash^* \nitem{z}{\sigma}{t} \}$.
This set is also called ``reachable trees'' from $c$.
\end{definition}

\begin{definition}[\Fone]
We define the standard \Fone metric of a tree~$t$ with respect to gold tree $\tG$ as
\(\Fone(t)\! =\! \frac{2 r p}{r+p}, \)
where 
\(\textstyle r
= \frac{|t\cap \tG|}{|\tG|}, 
  p
= \frac{|t \cap \tG|}{|t|}.\)
\end{definition}

The following two definitions are similar to those for dependency parsing by \namecite{goldberg+:2014}.

\begin{definition}
We extend the \Fone function 
to configurations
to define the maximum possible \Fone from a given configuration:
\(\Fone(c) = \max_{t_1 \in \desc(c)} \Fone(t_1).\)
\end{definition}

\begin{definition}[oracle]
We can now define the desired dynamic oracle of a configuration $c$ to be the set of 
actions that retrain the optimal \Fone:
\[\oracle(c) = \{ \tau \mid \Fone(\tau(c)) = \Fone(c) \}. \]
\label{def:ora}
\end{definition}

This abstract oracle is implemented by $\dyna(\cdot)$ in Sec.~\ref{sec:evenodd},
which we prove to be correct in Sec.~\ref{sec:correctness}.



\begin{definition}[span encompassing]
We say span $(i,j)$ is encompassed by span $(p,q)$, notated $(i,j) \preceq (p,q)$,
iff.~$p\leq i < j \leq q$. 
\end{definition}

\begin{definition}[strict encompassing]
We say span $(i,j)$ is strictly encompassed by span $(p,q)$, notated $(i,j) \prec (p,q)$,
iff.~$(i, j) \preceq (p,q)$ and $(i,j)\neq (p,q)$.
We then extend this relation from spans to brackets,
and notate 
$\mbracket{X}{i}{j} \prec \mbracket{Y}{p}{q}$ iff.~
$(i,j) \prec (p,q)$.
\end{definition}

\begin{figure}
\centering


\begin{tikzpicture}
\begin{scope}[shift={(-0.1cm,0.35cm)}]
\tikzset{sibling distance=4.1cm, level distance=4.95cm, edge from parent/.style={draw, blue}}
\Tree[.{\small \color{blue}{\bracket{S}{0}{5}}} {} {} ] 
\end{scope}

\begin{scope}[shift={(0.25cm,-0.875cm)}]
\tikzset{sibling distance=3.3cm, level distance=3.7cm, edge from parent/.style={draw, blue, very thick}}
\Tree[.{\small \color{blue}{\bf \bracket{VP}{1}{5}} \ \  \ \ \ \ \ } {} {} ]
\end{scope}

\begin{scope}[shift={(1.1cm,-2.6cm)}]
\tikzset{sibling distance=1.6cm, level distance=2.0cm, edge from parent/.style={draw, gray, dotted, thick}}
\Tree[.{\raisebox{-0.1cm}{\small {\color{gray}{\bracket{S/VP}{3}{5}}}\ \ \ \ \ \ \ \ \ \ \ }} {} {} ]
\end{scope}

\begin{scope}[shift={(1.7cm,-3.6cm)}]
\tikzset{sibling distance=0.3cm, level distance=1.0cm, edge from parent/.style={draw, cyan}}
\Tree[.{\raisebox{-0.2cm}{\small \color{cyan}{\bracket{NP}{4}{5}}\ \ \ \ \ \ \ \ }} {} {} ]
\end{scope}

\end{tikzpicture}

\vspace{-0.7cm}

\hspace{-0.05cm}\scalebox{1.25}{$_0\scalebox{0.8}[1]{\trapezoid}_1\scalebox{0.8}[1]{\trapezoid}_2\scalebox{2.8}[1]{\redtrapezoid}_4\hspace{0.6cm}_5$}
\vspace{0.2cm}

\hspace{-0.2cm} I 
\hspace{0.3cm} do 
\hspace{0.3cm} like 
\hspace{0.1cm} eating 
\hspace{0.15cm} fish



\caption{Reachable brackets (w.r.t.~gold tree in Fig.~\ref{fig:deductive}) 
for 
$c=\nitem{10}{[0,1,2,4]}{\{\bracket{NP}{0}{1}\}}$
which mistakenly combines ``like eating''. 
Trapezoids indicate stack spans (the top one in red),
and solid triangles denote reachable brackets,
with $\leftr(c)$ in {\color{blue}blue} and $\rightr(c)$ in {\color{cyan}cyan}.
The next reachable bracket, $\nextb(c)={\color{blue}\bracket{\bf VP}{1}{5}}$, is in bold. 
Brackets \bracket{VP}{3}{5} and \bracket{S}{3}{5}
(in dotted triangle)
cross the top span (thus unreachable), and \bracket{NP}{0}{1} is already recognized
(thus not in $\reach(c)$ either).}
\label{fig:reachable}
\end{figure}

We next define a central concept, {\em reachable brackets}, which is 
made up of two parts, the left ones $\leftr(c)$ which encompass $(i,j)$ 
without crossing any stack spans,
and the right ones $\rightr(c)$ which are completely on the queue.
See Fig.~\ref{fig:reachable} for examples. 

\begin{definition}[reachable brackets]
For any configuration $c=\nitem{z}{\sigma \newmid i \newmid j}{t}$, we define the set of reachable gold brackets
(with respect to gold tree \tG) as
\[\reach(c) = \leftr(c) \cup \rightr(c) \]
where the left- and right-reachable brackets are
\begin{align}
\leftr(c) & \!=\! \{\mbracket{X}{p}{q}\in \tG \mid (i,j)\prec (p,q),\, p\in \sigma \newmid i\}\notag\\
\rightr(c) & \!=\! \{\mbracket{X}{p}{q}\in \tG \mid p \geq j\} \notag
\end{align}
for even $z$, with the $\prec$ replaced by $\preceq$ for odd $z$.

Special case (initial): $\reach(\nitem{0}{[0]}{\emptyset})=\tG$.
\end{definition}

The notation $p \in \sigma \newmid i$ simply means
$(p,q)$ does not ``cross'' any bracket on the stack.
Remember our stack is just a list of span boundaries,
so if $p$ coincides with one of them, $(p,q)$'s left boundary is not crossing
and its right boundary $q$ is not crossing either
since $q\geq j$ due to $(i,j)\prec(p,q)$.

Also note that $\reach(c)$ is strictly disjoint from $t$, i.e., $\reach(c)\cap t=\emptyset$
and $\reach(c) \subseteq \tG - t$. See Figure~\ref{fig:tstar} for an illustration.


\begin{definition}[next bracket]
For any configuration $c=\nitem{z}{\sigma \newmid i \newmid j}{t}$, 
the next reachable gold bracket (with respect to gold tree \tG) 
is the smallest reachable bracket (strictly) encompassing $(i,j)$:
\[\nextb(c) = \textstyle\min_\prec \leftr(c).\]
\end{definition}


\subsection{Structural and Label Oracles}

\label{sec:evenodd}

For an even-step configuration $c=\nitem{z}{\sigma \mid i \mid j}{t}$,
we denote the next reachable gold bracket $\nextb(c)$ to be \mbracket{X}{p}{q},
and define the dynamic oracle to be:

\begin{equation}
\dyna(c) = 
 \begin{cases}
   \{\shift\} & \text{if $p=i$ and $q>j$} \\
   \{\combine\} & \text{if $p<i$ and $q=j$} \\
   \{\shift, \combine\} & \text{if $p<i$ and $q>j$} 
 \end{cases}
\label{eq:dyna1}
\end{equation}

As a special case $\dyna(\nitem{0}{[0]}{\emptyset}) = \{\shift\}$.

Figure~\ref{fig:dyna} shows examples of this policy.
The key insight is, if you follow this policy, you will {\em not} miss the next reachable bracket,
but if you do not follow it, you certainly will. 
We formalize this fact below (with proof omitted due to space constraints) 
which will be used to prove the central results later.
\begin{lemma}
For any configuration $c$, 
for any $\tau\in\dyna(c)$, we have $\reach(\tau(c)) = \reach(c)$;
for any $\tau'\notin\dyna(c)$, we have $\reach(\tau(c)) \subsetneq \reach(c)$.
\label{lem:destroy}
\end{lemma}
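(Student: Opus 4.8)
The plan is to argue directly from the definitions of $\reach(c)$, $\leftr(c)$, $\rightr(c)$, and $\dyna(c)$, by a case analysis on the parser step parity and on the action $\tau$. First I would handle even steps, where $\dyna(c)$ is given by Eq.~\eqref{eq:dyna1} in terms of $\nextb(c)=\mbracket{X}{p}{q}$. The key observation for the ``$=$'' direction is that a structural action at an even step only changes the top of the stack ($i$ and/or $j$), not the set of gold brackets already in $t$, nor the queue boundary beyond $j$; hence $\rightr(\tau(c))$ can only change at its boundary $j$, and $\leftr(\tau(c))$ changes only through which stack boundaries are available and through $(i,j)$ itself. For $\tau=\shift$ (legal in the oracle when $p=i$, $q>j$): the new configuration has top span $(j,j{+}1)$ and $i$ pushed onto $\sigma$; every bracket formerly in $\leftr(c)$ still has its left endpoint on the new stack and still strictly encompasses the new, smaller top span, so it stays left-reachable; every bracket formerly in $\rightr(c)$ with $p\ge j$ but $p\ne j$ stays right-reachable, and the one (if any) with $p=j$ — but since $p=i<j$ for $\nextb$ is false here, we need that no gold bracket has left endpoint exactly $j$; this follows because $\nextb(c)$ has $p=i$, and any gold bracket starting at $j$ would have to be nested inside or disjoint from $\nextb(c)$, and ``inside'' is impossible since $\nextb$ is the smallest bracket encompassing $(i,j)$ and $j$ is its interior... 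I need to be careful here and spell out why brackets starting at $j$ don't exist or migrate correctly between $\leftr$ and $\rightr$. Symmetrically for $\tau=\combine$ (legal when $p<i$, $q=j$): the two top spans $(i',i)$ and $(i,j)$ merge to $(i',j)$, removing $i$ from $\sigma$; a bracket in $\leftr(c)$ with left endpoint $i$ is exactly ruled out because $\nextb$ has $p<i$... again this needs the structure of the gold tree (no gold bracket can start at $i$ and be encompassed by $\nextb(c)$ without being $\nextb(c)$ itself or crossing). And when both $\shift$ and $\combine$ are in $\dyna(c)$ (i.e.\ $p<i$ and $q>j$), I'd verify both preserve $\reach$ by the same arguments.

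For the ``$\subsetneq$'' direction at even steps, I would take $\tau'\notin\dyna(c)$ and show $\nextb(c)\in\reach(c)\setminus\reach(\tau'(c))$, which both proves strict inclusion and is the fact the paper advertises (``if you do not follow it, you certainly will miss the next reachable bracket''). The three subcases: (i) if $p=i,q>j$ but $\tau'=\combine$, then combining merges $i$ with something below it, destroying the left endpoint $p=i$ of $\nextb(c)$, so $\nextb(c)$ is no longer left-reachable; (ii) if $p<i,q=j$ but $\tau'=\shift$, then shifting pushes $(j,j{+}1)$ and the right endpoint $q=j$ of $\nextb(c)$ is now strictly interior to a stack span, so $(p,q)$ would cross a stack bracket and is unreachable; (iii) if $p<i,q>j$ there is no illegal structural action except when one of $\shift/\combine$ is blocked by the stack/queue, and then the blocked action simply isn't available — I should note the oracle is nonempty by construction (reachability of $\tG$ from $c$, guaranteed since we only ever reach configurations with $\Fone(c)$ attainable). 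Also I must handle the $\nolabel$-at-last-step edge and the initial special case $\reach(\nitem{0}{[0]}{\emptyset})=\tG$ with $\dyna=\{\shift\}$, which is immediate since shifting from the axiom reaches the same $\tG$.

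Then I would do odd (label) steps. Here $\dyna(c)$ is given elsewhere (the label oracle), and the structure is simpler: a label action $\labelx\text{-}X$ adds $\mbracket{X}{i}{j}$ to $t$ and returns to an even step (where $\prec$ replaces $\preceq$ in the definition of $\leftr$), while $\nolabel$ also returns to an even step without adding anything. The claim to check: adding the \emph{correct} label(s) for $(i,j)$ — the ones in $\tG$ on span $(i,j)$ — moves exactly those brackets out of the ``$\preceq$-reachable'' set and into $t$, leaving $\reach$ unchanged (since $\reach$ is by construction disjoint from $t$ and those brackets were the only reachable ones on span $(i,j)$ itself); whereas an incorrect label, or $\nolabel$ when $(i,j)$ \emph{is} a gold span, permanently loses those gold brackets because the parser gets only one labeling opportunity per span, so $\reach$ strictly shrinks. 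The main obstacle in the whole proof is the bookkeeping in the even-step ``$=$'' direction: carefully tracking how individual gold brackets move between $\leftr$ and $\rightr$ (or stay put) when $i$ or $j$ changes, and proving that no gold bracket has an endpoint that gets ``straddled'' by the action — this rests on the nesting structure of the gold tree and on $\nextb(c)$ being the minimal encompassing bracket, so I would isolate that as a small structural sublemma about gold brackets with endpoints at $i$ or $j$ before running the case analysis.
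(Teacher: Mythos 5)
Since the paper omits its own proof of this lemma (``proof omitted due to space constraints''), there is nothing to compare line by line; on the merits, your even-step (structural) analysis is essentially the argument the lemma needs. The two ingredients are exactly the ones you name: monotonicity (no structural action ever adds a reachable bracket), and the minimality of $\nextb(c)$ together with the nesting of gold brackets, which rules out any bracket of $\leftr(c)$ ending exactly at $j$ when $\shift$ is licensed ($q>j$) and any bracket of $\leftr(c)$ starting exactly at $i$ when $\combine$ is licensed ($p<i$); a non-oracle structural action then destroys $\nextb(c)$ itself. Two local corrections: under $\shift$, the brackets you worry about --- those with \emph{left} endpoint $j$ --- are harmless, since they simply migrate from $\rightr(c)$ to $\leftr(\tau(c))$; the case your blanket claim ``every bracket formerly in $\leftr(c)$ still strictly encompasses the new top span'' silently assumes away is a bracket $\mbracket{X}{p'}{j}\in\leftr(c)$ with $p'<i$, which after $\shift$ neither encompasses $(j,j{+}1)$ nor lies on the queue; this is precisely where minimality of $\nextb(c)$ enters, and it is the content of the sublemma you propose to isolate. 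Also, exhibiting $\nextb(c)\in\reach(c)\setminus\reach(\tau'(c))$ yields $\subsetneq$ only after you also record $\reach(\tau'(c))\subseteq\reach(c)$, so state that monotonicity explicitly.

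The genuine gap is your odd-step case. Under the paper's literal definition, $\reach$ makes no reference to $t$, and the successor of a label step is an even step in which $\leftr$ uses \emph{strict} encompassing. Hence even the correct $\labelx$-$X$ on a gold span $(i,j)$ removes the gold brackets on $(i,j)$ from $\reach$ (they move into $t$), so the claimed equality fails as literally stated; conversely, a wrong label applied to a span that is \emph{not} a gold span adds only a non-gold bracket, destroys nothing reachable, and leaves $\reach$ unchanged, so the strict-inclusion half fails there as well --- such a mistake costs precision, not recall, and $\reach$ cannot see it. Your write-up glosses over both points by implicitly treating $\reach(c)$ as if it were $t\cup\reach(c)$. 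The invariant that actually survives label actions is $\tstar(\tau(c))=\tstarc$ with $\tstarc=t\cup\reach(c)$, which is exactly Lemma~\ref{lem:correct-wrong}; Lemma~\ref{lem:destroy} should be read (and proved) as a statement about structural configurations, as the surrounding discussion of ``missing the next reachable bracket'' indicates. So keep your even-step case analysis, but either restrict the lemma to even $z$ or restate the label-step claims in terms of $\tstar$, handling wrong labels on non-gold spans through precision (as in Theorem~\ref{thm:tstar}) rather than through $\reach$.
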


\begin{figure}
\resizebox{0.48\textwidth}{!}{
\begin{tabular}{l|l|l}
\hline
\multirow{2}{*}{configuration} & \multicolumn{2}{c}{oracle}\\
                       & \!\!static\!\! & dynamic \\
\hline
$_0\scalebox{0.8}[1]{\trapezoid}_1\scalebox{0.8}[1]{\trapezoid}_2\scalebox{0.8}[1]{\redtrapezoid}_3$    & \!\!\combine\!\!\! & \{\combine, \shift\}\!\!\! \\

\small   \;\,  I  \;\,            do    \;\,   like  & &  \hspace{-0.1cm}$_1\!\!\!\bmtr{do like eating fish}{fish}\!\!\!_5$ \hspace{-1.8cm}$_2\redtrapezoid_3$  \!\!\!\\
\hline

$_0\scalebox{0.8}[1]{\trapezoid}_1\scalebox{1.85
}[1]{\redtrapezoid}_3$  & \multirow{10}{*}{\!\!\em undef.\!\!\!} & \{\shift\} \\

\small   \;\,  I  \;\,            do \;\, like  \, \raisebox{0.5cm}{\scriptsize $t\!=\!\{..., \text{\sout{\bracket{VP}{1}{3}}}\}$} \!\!\!& &  \hspace{-0.1cm}$_1\!\!\!\bmtr{do like eating fish}{fish}\!\!\!_5$ \hspace{-2.1cm}$\scalebox{1.8}[0.9]{\redtrapezoid}_3$ \!\!\! \\ [0.2cm]
\cline{1-1} \cline{3-3}

$_0\scalebox{0.8}[1]{\trapezoid}_1\scalebox{0.8}[1]{\trapezoid}_2\text{\sout{\scalebox{2.8}[1]{\redtrapezoid}}}_4$   &  & \{\combine, \shift\}\!\!\! \\

\small   \;\,  I  \;\,            do \;\, {like eating}  & & \hspace{-0.2cm} $_1\!\!\!\bmtr{do like eating fish}{fish}\!\!\!_5$ \hspace{-1.85cm}$_2\scalebox{1.6}[0.9]{\redtrapezoid}_4$ \!\!\!\\[0.2cm]
\cline{1-1} \cline{3-3}

$_0\scalebox{0.8}[1]{\trapezoid}_1\scalebox{0.8}[1]{\trapezoid}_2\text{\sout{\scalebox{3.0}[1]{\trapezoid}}}_4\scalebox{0.8}[1]{\redtrapezoid}_5$  &  & \{\combine\} \\

\small   \;\,  I  \;\,            do \;\, {like eating} \;\, fish  & & \hspace{-0.2cm} $_1\!\!\!\bmtr{do like eating fish}{fish}\!\!\!_5$ \hspace{-1.cm}$_4\scalebox{0.8}[0.9]{\redtrapezoid}$  \!\!\!\\
\hline
\end{tabular}
}
\caption{Dynamic oracle with respect to the gold parse in Fig.~\ref{fig:parse}.
The last three examples are off the gold path with strike out indicating structural or label mistakes.
Trapezoids denote stack spans (top one in red) and 
the blue triangle denotes the next reachable bracket $\nextb(c)$ 
which is \bracket{VP}{1}{5} in all cases.
}
\label{fig:dyna}
\end{figure}



The label oracles are much easier than structural ones.
For an odd-step configuration $c=\nitem{z}{\sigma \mid i \mid j}{t}$,
we simply check if $(i,j)$ is a valid span in the gold tree $t_G$
and if so, label it accordingly, otherwise no label.
More formally,

\begin{equation}
\dyna(c) = 
 \begin{cases}
   \{\text{\labelx-$X$}\} & \text{if some $\mbracket{X}{i}{j}\in t_G$} \\
   \{\nolabel\} & \text{otherwise} 
 \end{cases}
\label{eq:dyna2}
\end{equation}

\subsection{Correctness}
\label{sec:correctness}

To show the optimality of our dynamic oracle, we begin 
by defining a special tree \tstarc\/ and show that it is optimal 
among all trees reachable from configuration $c$.
We then show that following our dynamic oracle (Eqs.~\ref{eq:dyna1}--\ref{eq:dyna2}) from 
 $c$ will lead to \tstarc. 

\begin{definition}[\tstarc]
For any configuration $c=\nitem{z}{\sigma}{t}$, 
we define the optimal tree $\tstarc$ to include all reachable gold brackets
and nothing else. More formally, \(\tstarc = t\cup \reach(c).\)
\label{def:tstar}
\end{definition}

We can show by induction that \tstarc\/ is attainable:
\begin{lemma}
For any configuration $c$, the optimal tree is a descendant of $c$,
i.e., $\tstarc \in \desc(c)$.
\label{lem:tstar}
\end{lemma}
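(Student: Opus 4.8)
The plan is to prove this by induction on the number of parser steps remaining, $r=(4n-2)-z$, for $c=\nitem{z}{\sigma}{t}$. The base case is $r=0$, i.e.~$z=4n-2$, so $c$ is a final configuration with stack $[0,n]$ and top span $(0,n)$. Since $c$ is final no action applies (shift needs $j<n$, combine needs two spans), so $\desc(c)=\{t\}$; and at an even step $\leftr(c)$ uses strict encompassing, so nothing can strictly encompass $(0,n)$, giving $\leftr(c)=\emptyset$, while $\rightr(c)=\emptyset$ because no gold bracket has left endpoint $\geq n$. Hence $\reach(c)=\emptyset$ and $\tstarc=t\in\desc(c)$.

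For the inductive step ($z<4n-2$, $c$ non-final) I would pick any $\tau\in\dyna(c)$, show the invariance $\tstar(\tau(c))=\tstarc$, and then apply the induction hypothesis to $\tau(c)$, which has $r-1$ steps remaining, to conclude $\tstarc=\tstar(\tau(c))\in\desc(\tau(c))\subseteq\desc(c)$. The invariance splits by parity. If $z$ is even (structural), $\tau$ leaves $t$ unchanged and Lemma~\ref{lem:destroy} gives $\reach(\tau(c))=\reach(c)$, so $\tstar(\tau(c))=t\cup\reach(\tau(c))=t\cup\reach(c)=\tstarc$. If $z$ is odd (label) with top span $(i,j)$: when $t_G$ has a bracket $\mbracket{X}{i}{j}$, plugging $p=i,q=j$ into the ($\preceq$-)definition of $\leftr$ valid at odd steps shows $\mbracket{X}{i}{j}\in\leftr(c)\subseteq\reach(c)$, the oracle (Eq.~\ref{eq:dyna2}) adds exactly this bracket, and passing from $\preceq$ at step $z$ to $\prec$ at step $z+1$ deletes exactly the gold brackets on span $(i,j)$ — just $\mbracket{X}{i}{j}$, by the at-most-one-bracket(-or-unary-chain)-per-span convention — so, using $\reach(c)\cap t=\emptyset$, $\tstar(\tau(c))=(t\cup\{\mbracket{X}{i}{j}\})\cup(\reach(c)\setminus\{\mbracket{X}{i}{j}\})=t\cup\reach(c)=\tstarc$; when $t_G$ has no bracket on $(i,j)$ the oracle picks \nolabel, $t$ is unchanged, and the $\preceq$-to-$\prec$ switch deletes nothing, so $\reach$ and $\tstar$ are unchanged.

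The last thing the inductive step needs is that $\dyna(c)$ is nonempty and every prescribed action is applicable. At odd $z<4n-1$ the label oracle is trivially defined and applicable. At even $z$ with $c$ non-final, the top span $(i,j)\neq(0,n)$, so the gold root bracket strictly encompasses $(i,j)$ and has left endpoint $0\in\sigma\newmid i$; hence $\leftr(c)\neq\emptyset$ and $\nextb(c)=\mbracket{X}{p}{q}$ is well defined with $p\leq i$, $q\geq j$, $(p,q)\neq(i,j)$, so exactly one of the three cases of Eq.~\ref{eq:dyna1} holds; moreover when it prescribes \shift\ we have $j<q\leq n$ so shift is applicable, and when it prescribes \combine\ we have $p<i$ with $p\in\sigma\newmid i$, hence $p\in\sigma$ and the stack carries the two spans combine needs. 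I expect this "never stuck'' check, together with the $\preceq/\prec$ bookkeeping in the odd-step case (matching the bracket the oracle adds against the one that leaves $\reach$), to be the only delicate parts; the structural case is immediate from Lemma~\ref{lem:destroy}, and the base case is routine.
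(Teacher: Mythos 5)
Your proof is correct and takes essentially the same route the paper intends: the paper only asserts "we can show by induction that $\tstarc$ is attainable," and your induction on remaining steps—base case at the final configuration, inductive step following any $\tau\in\dyna(c)$ and showing $\tstar(\tau(c))=\tstarc$ (the structural case via Lemma~\ref{lem:destroy}, the label case via the $\preceq$/$\prec$ bookkeeping)—is exactly the fleshed-out version of that argument, and it matches the machinery of Lemma~\ref{lem:correct-wrong}. Your additional "never stuck" check (oracle actions are always applicable, using the gold root bracket to guarantee $\nextb(c)$ exists at even steps) is a detail the paper leaves implicit but is indeed needed, and you handle it correctly.
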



\begin{figure}
\centering
\resizebox{!}{4.5cm}{\begin{tikzpicture}

\tikzstyle{annot} = [font=\huge, text centered];

\node[annot] at (2,6) {$t$};
\node[annot] at (5,6) {$t_G$};
\node[annot] (rlabel) at (8,6) {$\mathit{reach}(c)$};

\node[draw,circle,minimum size=3cm,fill=gray!40,] (rcircle) at (5,2) {};

\draw (2,2)[fill=gray!15] circle (2.5cm);
\draw (5,2) circle (3cm);

\path[-, dashed] (rlabel) edge (rcircle);

\fill (0.25,-1.75)[fill=gray!15] rectangle (1.25,-2.25);
\fill (1.25,-1.75)[fill=gray!40] rectangle (2.25,-2.25);
\draw (0.25,-1.75)[] rectangle (2.25,-2.25);

\node[annot] at (5.5,-2) {$\tstarc = t \cup \mathit{reach}(c)$};

\end{tikzpicture}}
\caption{The optimal tree $\tstarc$
adds all reachable brackets and nothing else.
Note that $\reach(c)$ and $t$ are disjoint. 
}
\label{fig:tstar}
\end{figure}
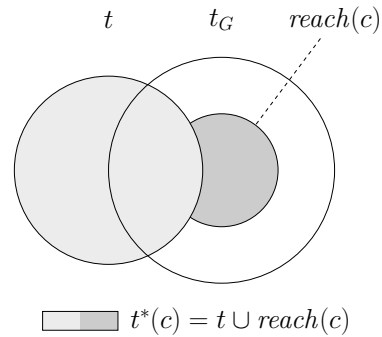

The following Theorem shows that \tstarc\/ is indeed the best possible tree:

\begin{theorem}[optimality of $t^*$]
For any configuration~$c$, 
\(\Fone(\tstarc) = \Fone(c).\)
\label{thm:tstar}
\end{theorem}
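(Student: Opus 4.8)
The plan is to prove the two inequalities $\Fone(\tstarc) \le \Fone(c)$ and $\Fone(\tstarc) \ge \Fone(c)$ separately. The first is immediate from Lemma~\ref{lem:tstar}: since $\tstarc \in \desc(c)$, we have $\Fone(\tstarc) \le \max_{t_1 \in \desc(c)} \Fone(t_1) = \Fone(c)$. For the second, note $\desc(c)$ is finite and (by Lemma~\ref{lem:tstar}) nonempty, so $\Fone(c)$ is attained by some reachable tree; hence it suffices to show $\Fone(t_1) \le \Fone(\tstarc)$ for \emph{every} $t_1 \in \desc(c)$.

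I would first normalize the metric. Writing $g(t) = |t \cap \tG|$ and $b(t) = |t \setminus \tG|$ for the numbers of correct and incorrect brackets of a tree $t$, a one-line calculation from the definition of \Fone collapses it to $\Fone(t) = \frac{2\,g(t)}{g(t) + b(t) + |\tG|}$ (interpreting it as $0$ when $g(t)=0$). As a function of $(g,b)$ with $|\tG|$ fixed and $|\tG|\ge 1$, this is monotonically non-decreasing in $g$ and non-increasing in $b$. Consequently it is enough to prove that $\tstarc$ \emph{dominates} every $t_1 \in \desc(c)$, i.e.\ $g(t_1) \le g(\tstarc)$ and $b(t_1) \ge b(\tstarc)$.

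The incorrect-bracket side is easy: brackets are only ever added along a derivation (structural steps leave $t$ unchanged, label steps only add), so $t \subseteq t_1$, hence $t \setminus \tG \subseteq t_1 \setminus \tG$; and $\tstarc \setminus \tG = (t \cup \reach(c)) \setminus \tG = t \setminus \tG$ because $\reach(c) \subseteq \tG$. So $b(t_1) \ge b(\tstarc)$. The correct-bracket side is the crux: I need $(t_1 \setminus t) \cap \tG \subseteq \reach(c)$, i.e.\ every gold bracket that $t_1$ acquires after $c$ was already reachable at $c$. The key supporting fact is that $\reach$ is monotonically non-increasing along $\vdash$: for structural (even) steps this is exactly Lemma~\ref{lem:destroy} (both the oracle and the non-oracle cases give $\reach(\tau(c)) \subseteq \reach(c)$), and for label (odd) steps it follows directly from the definition of $\reach$, since passing from an odd to an even configuration only tightens $\preceq$ to $\prec$ on the top span and drops exactly the bracket(s) on that span, regardless of whether a label is assigned. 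Therefore, if $t_1$ is reached via $c \vdash^* c_f$ and a gold bracket $\beta \in t_1 \setminus t$ is first committed at a label step $c' \vdash c''$ whose top span is $(i,j)$, then $\beta$ is the bracket $\mbracket{X}{i}{j}$, and since $\beta \in \tG$ we have $\beta \in \leftr(c') \subseteq \reach(c') \subseteq \reach(c)$. This gives $t_1 \cap \tG \subseteq (t \cap \tG) \cup \reach(c) = \tstarc \cap \tG$, i.e.\ $g(t_1) \le g(\tstarc)$.

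Finally I would combine these: with $g_1 = g(t_1) \le g^* = g(\tstarc)$ and $b_1 = b(t_1) \ge b^* = b(\tstarc)$, monotonicity of $\frac{2g}{g+b+|\tG|}$ yields
\[
\Fone(t_1) = \frac{2 g_1}{g_1 + b_1 + |\tG|} \;\le\; \frac{2 g_1}{g_1 + b^* + |\tG|} \;\le\; \frac{2 g^*}{g^* + b^* + |\tG|} = \Fone(\tstarc),
\]
where the last step uses that $b^* + |\tG| \ge 1 > 0$; the case $g_1 = 0$ is trivial. Together with the first inequality this gives $\Fone(\tstarc) = \Fone(c)$. I expect the main obstacle to be the correct-bracket domination $(t_1 \setminus t) \cap \tG \subseteq \reach(c)$: establishing the monotonicity of $\reach$ along arbitrary (not just oracle) derivations requires careful bookkeeping of how $\reach$ and $t$ interact across the alternation of structural and label steps and the $\prec/\preceq$ switch, and leans crucially on Lemma~\ref{lem:destroy} for the structural case.
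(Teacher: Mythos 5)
Your proof is correct and follows essentially the same route as the paper's (sketch) proof: you show $\tstarc$ dominates every $t_1\in\desc(c)$ in both correct brackets (recall) and incorrect brackets (precision), and then conclude for \Fone by monotonicity, which is exactly the paper's ``no higher recall, no higher precision, hence no higher \Fone'' argument. The only difference is that you make rigorous what the paper leaves implicit—namely that any gold bracket addable after $c$ already lies in $\reach(c)$, which you justify via the non-increase of $\reach$ along arbitrary derivations (Lemma~\ref{lem:destroy} plus the $\preceq/\prec$ switch at label steps)—so this is a fleshed-out version of the same proof rather than a different one.
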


\begin{proof}
({\sc Sketch}) Since \tstarc\ adds all possible additional gold brackets (the brackets in $\reach(c)$), it is not possible to get higher recall. Since it adds no incorrect brackets, it is not possible to get higher precision. Since \Fone is the harmonic mean of precision and recall, it also leads to the best possible \Fone.
\end{proof}

\begin{corollary}
For any $c=\nitem{z}{\sigma}{t}$, for any $t'\in \desc(c)$ and $t'\neq \tstarc$, we have 
\(\Fone(t') < \Fone(c).\)
\label{cor:tprime}
\end{corollary}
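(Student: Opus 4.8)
The plan is to upgrade Theorem~\ref{thm:tstar} from ``$\tstarc$ attains the maximum $\Fone$ over $\desc(c)$'' to ``$\tstarc$ is the \emph{unique} maximizer''; recall that $\tstarc\in\desc(c)$ by Lemma~\ref{lem:tstar}, so uniqueness is the only missing piece. The convenient reformulation is the elementary identity $\Fone(t)=\frac{2\,|t\cap\tG|}{|t|+|\tG|}$, obtained by clearing denominators in the harmonic mean; this turns the whole question into a monotonicity statement about one ratio. Fix $c=\nitem{z}{\sigma}{t}$ and write $G=|\tG|$, $a=|t\cap\tG|$, $m=|\reach(c)|$. Since $\reach(c)\subseteq\tG$ is disjoint from $t$ (Fig.~\ref{fig:tstar}) and $\tstarc=t\cup\reach(c)$ by Definition~\ref{def:tstar}, we get $|\tstarc\cap\tG|=a+m$ and $|\tstarc|=|t|+m$.

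First I would isolate the structural content: for every $t'\in\desc(c)$, (i) $t\subseteq t'$, because each of the four rules of Fig.~\ref{fig:deductive} only inserts brackets into the third component; and (ii) $(t'\setminus t)\cap\tG\subseteq\reach(c)$, i.e.\ no newly added gold bracket can fall outside $\reach(c)$. For (ii) the key is that $\reach(\cdot)$ is non-increasing along any single derivation step: an oracle action preserves it and a non-oracle action strictly shrinks it, both by Lemma~\ref{lem:destroy}. Moreover the only rule inserting a bracket, \labelx-$X$, is applied to the top span $(i,j)$ of an \emph{odd}-step configuration, and there (because odd steps use $\preceq$ in the definition of $\leftr$) the span $(i,j)$ itself is included, so whenever $\mbracket{X}{i}{j}\in\tG$ we have $\mbracket{X}{i}{j}\in\leftr\subseteq\reach$ of that configuration, hence, by monotonicity along $c\vdash^*\nitem{2(2n-1)}{[0,n]}{t'}$, in $\reach(c)$. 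Setting $s=|(t'\setminus t)\cap\tG|$ and $u=|t'\setminus t|$, this gives $s\le m$, $s\le u$, $|t'\cap\tG|=a+s$, $|t'|=|t|+u$, and a cardinality count using (ii) shows $t'=\tstarc$ iff $(s,u)=(m,m)$.

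What is left is arithmetic. Put $D=|t|+G$. Then $D-a=(|t|-|t\cap\tG|)+G$ equals the number of non-gold brackets in $t$ plus $G$, so $D-a\ge G\ge1$ (the root bracket $\bracket{S}{0}{n}$ is in $\tG$). We must show $\frac{a+s}{D+u}<\frac{a+m}{D+m}$ whenever $(s,u)\ne(m,m)$. If $s<m$, then $\frac{a+s}{D+u}\le\frac{a+s}{D+s}$ since $u\ge s$, and $x\mapsto\frac{a+x}{D+x}=1-\frac{D-a}{D+x}$ is strictly increasing because $D-a>0$, hence $\frac{a+s}{D+s}<\frac{a+m}{D+m}$. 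If $s=m$, then $u\ge s=m$ and $u\ne m$ (else $(s,u)=(m,m)$), so $u>m$, and $\frac{a+m}{D+u}<\frac{a+m}{D+m}$ provided $a+m>0$. The positivity $a+m\ge1$ holds because the root bracket $\bracket{S}{0}{n}$ is never crossed by a stack span, so at any non-final configuration it lies in $\leftr(c)\subseteq\reach(c)$ unless already in $t$; thus $\bracket{S}{0}{n}\in\tstarc$, giving $a+m\ge1$; and at a final configuration $\desc(c)=\{\tstarc\}$ so the corollary is vacuous there.

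I expect part (ii) of the structural claim, together with the $a+m=0$ degeneracy, to be the main obstacle. The harmonic-mean rewriting and the monotonicity estimate are routine; the real work is pinning down exactly how $\reach$ behaves under non-oracle actions (leaning on Lemma~\ref{lem:destroy}) and ruling out that a lost gold bracket can be recovered later, while the positivity of $a+m$ is precisely what separates the two otherwise-tied zero-recall trees and makes the inequality strict rather than merely weak.
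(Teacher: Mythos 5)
Your proposal is correct, and it is considerably more explicit than what the paper offers: the paper states Corollary~\ref{cor:tprime} with no proof at all, treating it as the strictness refinement implicit in the precision/recall sketch of Theorem~\ref{thm:tstar} ($\tstarc$ has the best possible recall and the best possible precision, so anything else is worse). You instead clear denominators to get $\Fone(t)=2|t\cap\tG|/(|t|+|\tG|)$ and reduce the claim to monotonicity in two integer parameters $(s,u)$, after first pinning down the structure of an arbitrary $t'\in\desc(c)$: $t\subseteq t'$, and every newly added gold bracket lies in $\reach(c)$, which you derive from the fact that a \labelx-$X$ action can only create a bracket over the current top span (covered by $\preceq$ in the odd-step definition of $\leftr$) together with the monotone non-increase of $\reach$ under arbitrary actions given by Lemma~\ref{lem:destroy}. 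This buys two things the paper's implicit argument glosses over: it actually proves the "cannot gain recall outside $\reach(c)$" half rather than asserting it, and it isolates the only configuration-level degeneracy where strictness could fail, namely $|t\cap\tG|+|\reach(c)|=0$ (two zero-$\Fone$ trees would tie), which you correctly rule out by observing that a gold bracket over $(0,n)$ is in $t\cup\reach(c)$ at every configuration except the goal, where the corollary is vacuous since $\desc(c)=\{\tstarc\}$. The case analysis ($s<m$ via $u\ge s$ and strict monotonicity of $x\mapsto(a+x)/(D+x)$ using $D-a\ge|\tG|\ge 1$; $s=m$, $u>m$ via $a+m\ge 1$) is sound, and the characterization $t'=\tstarc$ iff $(s,u)=(m,m)$ is right. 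The only dependency to flag is that, like the paper, you lean on Lemma~\ref{lem:destroy}, which the paper itself states without proof; granting it, your argument is complete.
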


We now need a final lemma about the policy $\dyna(\cdot)$ (Eqs.~\ref{eq:dyna1}--\ref{eq:dyna2})
before proving the main result.

\begin{lemma}
From any $c=\nitem{z}{\sigma}{t}$, for any action $\tau \in \dyna(c)$,
we have $\tstar(\tau(c)) = \tstarc$.
For any action $\tau' \notin \dyna(c)$,
we have $\tstar(\tau'(c)) \neq \tstarc$.
\label{lem:correct-wrong}
\end{lemma}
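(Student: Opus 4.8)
The plan is to unwind Definition~\ref{def:tstar}, $\tstarc = t\cup\reach(c)$, and to track separately how the two pieces $t$ and $\reach(\cdot)$ move under a single action $\tau$. I would split on the parity of $z$, since structural and label actions behave differently: a structural action (even $z$) never touches $t$, so everything is governed by how $\reach$ moves, whereas a label action (odd $z$) adds at most one bracket to $t$ and I must combine that with the accompanying change in $\reach$. Throughout I will use the two facts stated just after the definition of $\reach$: $\reach(c)\cap t=\emptyset$ and $\reach(c)\subseteq\tG$.

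For even $z$ (structural actions), $\tau(c)$ carries exactly the same bracket set $t$, so $\tstar(\tau(c))=t\cup\reach(\tau(c))$. If $\tau\in\dyna(c)$, then by Lemma~\ref{lem:destroy} $\reach(\tau(c))=\reach(c)$, hence $\tstar(\tau(c))=t\cup\reach(c)=\tstarc$. If $\tau'\notin\dyna(c)$ (so $\tau'$ is whichever of the two structural actions is applicable but not selected), then Lemma~\ref{lem:destroy} gives $\reach(\tau'(c))\subsetneq\reach(c)$, and I pick any $b\in\reach(c)\setminus\reach(\tau'(c))$; since $b\in\reach(c)$ and $\reach(c)\cap t=\emptyset$ we get $b\notin t$, so $b\in\tstarc$ but $b\notin t\cup\reach(\tau'(c))=\tstar(\tau'(c))$, i.e.\ $\tstar(\tau'(c))\neq\tstarc$. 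The axiom $\nitem{0}{[0]}{\emptyset}$ is the special case $\dyna=\{\shift\}$: here I would check directly that $\reach$ stays $\tG$ and $t$ stays empty, and the second assertion is vacuous since only $\shift$ is applicable.

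For odd $z$ (label actions), the stack $\sigma\newmid i\newmid j$ is unchanged, so the only effect on $\reach$ of passing from $c$ to its even successor $\tau(c)$ is that $\leftr$ switches from $\preceq$ to $\prec$ (the $p\in\sigma\newmid i$ condition and $\rightr$ are untouched). Since $\tG$ has at most one bracket of the form $\mbracket{X}{i}{j}$, this deletes $\mbracket{X}{i}{j}$ from $\reach(c)$ exactly when $(i,j)$ is a gold span with label $X$ and deletes nothing otherwise; I will also use that a span is the stack top at an odd step at most once in any derivation, so $t$ contains no bracket of the form $\mbracket{Z}{i}{j}$. Now I follow the two cases of Eq.~\ref{eq:dyna2}. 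If $(i,j)$ is a gold span with label $X$, then $\dyna(c)=\{\labelx\text{-}X\}$ and $\mbracket{X}{i}{j}\in\leftr(c)\subseteq\reach(c)\subseteq\tstarc$; applying $\labelx\text{-}X$ moves $\mbracket{X}{i}{j}$ out of $\reach$ and into $t$, so $\tstar(\labelx\text{-}X(c))=(t\cup\{\mbracket{X}{i}{j}\})\cup(\reach(c)\setminus\{\mbracket{X}{i}{j}\})=\tstarc$, while any other action (either $\nolabel$ or $\labelx\text{-}Y$ with $Y\neq X$) neither places $\mbracket{X}{i}{j}$ in $t$ nor keeps it in $\reach$, so $\mbracket{X}{i}{j}\in\tstarc\setminus\tstar(\tau'(c))$. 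If $(i,j)$ is not a gold span, then $\dyna(c)=\{\nolabel\}$, which changes neither $t$ nor $\reach$, so $\tstar(\nolabel(c))=\tstarc$; the only other action $\labelx\text{-}Y$ inserts the non-gold bracket $\mbracket{Y}{i}{j}$ into $t$ while leaving $\reach$ unchanged, and since $\mbracket{Y}{i}{j}\notin\tG\supseteq\reach(c)$ and $\mbracket{Y}{i}{j}\notin t$ we get $\mbracket{Y}{i}{j}\in\tstar(\labelx\text{-}Y(c))\setminus\tstarc$.

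I expect the main obstacle to be the odd case rather than the even one, which falls out of Lemma~\ref{lem:destroy} almost immediately: in the odd case I need to pin down precisely which bracket (if any) leaves $\reach$ on the step from an odd to an even configuration, and to justify the two bookkeeping facts --- at most one gold bracket per span, and no bracket of the form $\mbracket{Z}{i}{j}$ already in $t$ at an odd configuration with top span $(i,j)$ --- which are exactly what make the single exhibited bracket a genuine witness separating $\tstarc$ from $\tstar(\tau'(c))$.
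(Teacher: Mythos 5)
Your proposal takes essentially the same route as the paper, whose proof of this lemma is only the sketch ``case analysis on even/odd $z$'': for structural steps you invoke Lemma~\ref{lem:destroy} together with $\reach(c)\cap t=\emptyset$, and for label steps you directly track the $\preceq$-to-$\prec$ switch in $\leftr$, which is exactly the intended argument, and your two bookkeeping facts (the witness bracket is not already in $t$, and each span is the labeling candidate at most once) are correct. One small patch: when the gold tree has a unary chain over $(i,j)$, $\tG$ contains several brackets over that span (e.g., \bracket{S}{3}{5} and \bracket{VP}{3}{5}), so your ``at most one bracket $\mbracket{X}{i}{j}$ in $\tG$'' step should instead say that \emph{all} gold brackets over $(i,j)$ leave $\reach$ at the odd-to-even transition and are added to $t$ precisely by the full-chain \labelx-$X$ action, with the witness for a wrong \labelx-$Y$ being either a gold bracket of the chain missing from $Y$ or a non-gold bracket introduced by $Y$ --- the rest of your argument goes through unchanged.
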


\begin{proof}
({\sc Sketch}) By case analysis on even/odd $z$. 
\end{proof}
We are now able to state and prove the main theoretical result of this paper
(using Lemma~\ref{lem:correct-wrong},
Theorem~\ref{thm:tstar} and Corollary~\ref{cor:tprime}):
\begin{theorem}
The function $\dyna(\cdot)$ in Eqs.~(\ref{eq:dyna1}--\ref{eq:dyna2})
satisfies the requirement of a dynamic oracle (Def.~\ref{def:ora}):
\[\dyna(c) = \oracle(c) \text{\rm \ for any configuration $c$}.\]
\end{theorem}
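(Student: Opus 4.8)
The plan is to prove the two inclusions $\dyna(c)\subseteq\oracle(c)$ and $\oracle(c)\subseteq\dyna(c)$ separately, obtaining each as a short consequence of Lemma~\ref{lem:correct-wrong} together with the $t^*$ machinery (Lemma~\ref{lem:tstar}, Theorem~\ref{thm:tstar}, and Corollary~\ref{cor:tprime}). Throughout I would quantify only over actions $\tau$ that are actually applicable to $c$, since both $\dyna(c)$ and $\oracle(c)$ are by construction subsets of the applicable actions.

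For the first inclusion, take any $\tau\in\dyna(c)$. Lemma~\ref{lem:correct-wrong} gives $\tstar(\tau(c))=\tstarc$. Applying Theorem~\ref{thm:tstar} to both $\tau(c)$ and $c$ then yields $\Fone(\tau(c))=\Fone(\tstar(\tau(c)))=\Fone(\tstarc)=\Fone(c)$, which is exactly the defining condition of $\oracle(c)$ in Def.~\ref{def:ora}.

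For the second inclusion I would argue by contraposition: assume $\tau\notin\dyna(c)$ and show $\tau\notin\oracle(c)$, i.e.\ $\Fone(\tau(c))<\Fone(c)$. By Lemma~\ref{lem:correct-wrong}, $\tstar(\tau(c))\neq\tstarc$. By Lemma~\ref{lem:tstar} applied to $\tau(c)$ we have $\tstar(\tau(c))\in\desc(\tau(c))$, and since $c\vdash_\tau\tau(c)$ and $\vdash^*$ is transitive, $\desc(\tau(c))\subseteq\desc(c)$; hence $\tstar(\tau(c))\in\desc(c)$. Thus $\tstar(\tau(c))$ is a tree reachable from $c$ that differs from $\tstarc$, so Corollary~\ref{cor:tprime} gives $\Fone(\tstar(\tau(c)))<\Fone(c)$, and Theorem~\ref{thm:tstar} (applied to $\tau(c)$) turns this into $\Fone(\tau(c))=\Fone(\tstar(\tau(c)))<\Fone(c)$. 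This establishes the contrapositive. The same reasoning with $\tau\in\dyna(c)$, which is always nonempty, shows $\oracle(c)$ is nonempty as well, so the abstract oracle is well defined.

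The conceptual heavy lifting has already been done in the earlier results — the even/odd case analysis behind Lemma~\ref{lem:correct-wrong}, and the bracket-counting arguments behind Theorem~\ref{thm:tstar} and Corollary~\ref{cor:tprime} that rest on Lemma~\ref{lem:destroy} — so I expect the theorem itself to be essentially bookkeeping. The one point that needs care is the backward direction: it crucially uses the \emph{strict} inequality of Corollary~\ref{cor:tprime} (not merely that $\tstarc$ is \emph{a} maximizer of \Fone over $\desc(c)$), and it needs closure of $\desc(\cdot)$ under $\vdash$ so that the witness $\tstar(\tau(c))$ is legitimately a descendant of $c$. I would also remark that the axiom case $c=\nitem{0}{[0]}{\emptyset}$ is subsumed, since the special cases of $\dyna$ and of $\reach$/$\tstar$ at the axiom are mutually consistent, so no separate base-case argument is required.
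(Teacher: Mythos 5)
Your proof is correct and takes essentially the same route the paper intends: the paper derives the theorem exactly from Lemma~\ref{lem:correct-wrong}, Theorem~\ref{thm:tstar}, and Corollary~\ref{cor:tprime}, splitting into the two inclusions just as you do (forward via $\tstar(\tau(c))=\tstarc$ and \Fone-equality, backward via the strict inequality of Corollary~\ref{cor:tprime} applied to the witness $\tstar(\tau(c))\in\desc(c)$). Your added observations about $\desc(\tau(c))\subseteq\desc(c)$ and the axiom case are consistent with, and slightly more explicit than, the paper's presentation.
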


\subsection{Implementation and Complexity}

\label{sec:complexity}

For any configuration, our dynamic oracle can be computed in {\bf amortized constant} time
since there are only $O(n)$ gold brackets and thus bounding 
$|\reach(c)|$ and the choice of $\nextb(c)$. 
After each action, $\nextb(c)$ either remains unchanged,
or in the case of being crossed by a structural action
or mislabeled by a label action,
needs to be updated.
This update is simply tracing the parent link to the next smallest gold bracket repeatedly 
until the new bracket encompasses span $(i,j)$.
Since there are at most $O(n)$ choices of $\nextb(c)$ and there are $O(n)$ steps,
the per-step cost is amortized constant time.
Thus our dynamic oracle is much faster than the super-linear time oracle for
arc-standard dependency parsing in \namecite{goldberg+:2014}.



\section{Related Work}
\label{sec:related}
Neural networks have been used for constituency parsing in a number of previous instances. For example, \namecite{socher2013parsing} learn a recursive network that combines vectors representing partial trees, \namecite{vinyals2015} adapt a sequence-to-sequence model to produce parse trees, \namecite{watanabe2015transition} use a recursive model applying a shift-reduce system to constituency parsing with beam search, and \namecite{dyer2016recurrent} adapt the Stack-LSTM dependency parsing approach to this task. \namecite{durrett2015neural} combine both neural and sparse features for a CKY parsing system. Our own previous work \cite{cross2016acl} use a recurrent sentence representation in a head-driven transition system which allows for greedy parsing but does not achieve state-of-the-art results.  

The concept of ``oracles'' for constituency parsing
(as the tree that is most similar to $\tG$ among all possible trees)
was first defined and solved 
by \namecite{huang:2008} in bottom-up parsing.
In transition-based parsing,
the dynamic oracle for shift-reduce dependency parsing 
costs worst-case $O(n^3)$ time \cite{goldberg+:2014}.
On the other hand, after the submission of our paper 
we became aware of a parallel work \cite{coavoux2016acl} 
that also proposed a dynamic oracle for 
their own incremental constituency parser. 
However, it is not optimal due to dummy non-terminals 
from binarization.

\section{Experiments}

We present experiments on both the Penn English Treebank \cite{marcus1993building} and the French Treebank \cite{abeille2003building}. In both cases, all state-action training pairs for a given sentence are used at the same time, greatly increasing training speed since all examples for the same sentence share the same forward and backward pass through the recurrent part of the network. Updates are performed in minibatches of 10 sentences, and we shuffle the training sentences before each epoch. The results we report are trained for 10 epochs.

\begin{table}[t]
\centering
\resizebox{.5\textwidth}{!}{
\begin{tabular}{| l | c |}

    \hline
    \multicolumn{2}{| l |}{\bf Network architecture} \\
    \hline
    Word embeddings & 50  \\
    Tag embeddings & 20  \\
    Morphological embeddings$^\dagger$ & 10 \\
    LSTM layers & 2 \\
    LSTM units & 200 / direction \\
    ReLU hidden units & 200 / action type \\
    \hline \hline

    \multicolumn{2}{| l |}{\bf Training settings} \\
    \hline
    Embedding intialization & random \\
    Training epochs & 10 \\
    Minibatch size  & 10 sentences \\
    Dropout (on LSTM output) & $p=0.5$ \\
    ADADELTA parameters & $\rho=0.99$, $\epsilon=1 \times 10^{-7}$ \\
    \hline
\end{tabular}
}
\caption{Hyperparameters. $^\dagger$French only.} 
\label{tab:settings}
\end{table}

The only regularization which we employ during training is dropout \cite{Hinton2012}, which is applied with probability $0.5$ to the recurrent outputs. It is applied separately to the input to the second LSTM layer for each sentence, and to the input to the ReLU hidden layer (span features) for each state-action pair. We use the {\sc AdaDelta} method \cite{Zeiler2012} to schedule learning rates for all weights. All of these design choices are summarized in Table~\ref{tab:settings}.

In order to account for unknown words during training, we also adopt the strategy described by \namecite{kiperwasser2016}, where words in the training set are replaced with the unknown-word symbol {\tt UNK} with probability $p_{unk} = \frac{z}{z + f(w)}$ where $f(w)$ is the number of times the word appears in the training corpus. We choose the parameter $z$ so that the training and validation corpora have approximately the same proportion of unknown words. For the Penn Treebank, for example, we used $z = 0.8375$ so that both the validation set and the (rest of the) training set contain approximately 2.76\% unknown words. This approach was helpful but not critical, improving \Fone (on dev) by about 0.1 over training without any unknown words.

\subsection{Training with Dynamic Oracle}

The most straightforward use of dynamic oracles to train a neural network model, where we collect all action examples for a given sentence before updating, is ``training with exploration'' as proposed by \namecite{goldberg2013training}. This involves parsing each sentence according to the current model and using the oracle to determine correct actions for training. We saw very little improvement on the Penn treebank validation set using this method, however. Based on the parsing accuracy on the training sentences, this appears to be due to the model overfitting the training data early during training, thus negating the benefit of training on erroneous paths. 

Accordingly, we also used a method recently proposed by \namecite{ballesteros2016training}, which specifically addresses this problem. This method introduces stochasticity into the training data parses by randomly taking actions according to the softmax distribution over action scores. This introduces realistic mistakes into the training parses, which we found was also very effective in our case, leading to higher \Fone scores, though it noticeably sacrifices recall in favor of precision.

This technique can also take a parameter $\alpha$ to flatten or sharpen the raw softmax distribution. The results on the Penn treebank development set for various values of $\alpha$ are presented in Table~\ref{table:internal_oracle_results}. We were surprised that flattening the distribution seemed to be the least effective, as training accuracy (taking into account sampled actions) lagged somewhat behind validation accuracy. Ultimately, the best results were for $\alpha = 1$, which we used for final testing.

\begin{table}[H]
\centering
\begin{tabular}{| l |c c c |}
   \hline
   Model  &  LR  &  LP &  \Fone  \\
   \hline
   Static Oracle             & 91.34 & 91.43 & 91.38  \\
   \hline
   Dynamic Oracle            & 91.14 & 91.61 & 91.38  \\
   \hline
   + Explore ($\alpha\!=\!0.5$)  & 90.59 & 92.18 & 91.38   \\
   + Explore ($\alpha\!=\!1.0$)  & 91.07 & 92.22 & {\bf 91.64}   \\
   + Explore ($\alpha\!=\!1.5$) &  91.07 & 92.12 & 91.59  \\
    \hline
\end{tabular}
\caption{Comparison of performance on PTB development set using different oracle training approaches.}
\label{table:internal_oracle_results}
\end{table}

\subsection{Penn Treebank}

Following the literature, we used the Wall Street Journal portion of the Penn Treebank, with standard splits for training (secs 2--21), development (sec 22), and test sets (sec 23). Because our parsing system seamlessly handles non-binary productions, minimal data preprocessing was required. For the part-of-speech tags which are a required input to our parser, we used the Stanford tagger with 10-way jackknifing. 



\begin{table}[t]
\centering
\resizebox{.5\textwidth}{!}{
\begin{tabular}{| l | c c c |}
   \hline
   Closed Training \& Single Model  &  LR  &  LP &  \Fone  \\
   \hline
   \namecite{sagae2006}       &  88.1  &  87.8  &  87.9  \\
   \namecite{petrov+klein:2007}& 90.1  &  90.3  &  90.2  \\
   \namecite{carreras2008tag}  &  90.7  &  91.4  &  91.1  \\
   \namecite{shindo2012bayesian}  &     &        &  91.1  \\
   $\dagger$\namecite{socher2013parsing}   &     &        &  90.4  \\
   \namecite{zhu2013}        &  90.2  &  90.7  &  90.4  \\
   \namecite{mi+huang:2015}  &  90.7  &  90.9  &  90.8  \\
   $\dagger$\namecite{watanabe2015transition}  &  &  &    90.7  \\

   \namecite{thang2015} (A*)  &  90.9  &  91.2  &  91.1  \\
   $\dagger$*\namecite{dyer2016recurrent} (discrim.)       &        &        &  89.8  \\
   $\dagger$*\namecite{cross2016acl} & & & 90.0\\
   \hline
   $\dagger$*{\bf static oracle}          &  90.7  &  91.4  &  91.0  \\
   $\dagger$*{\bf dynamic/exploration}    &  90.5  &  92.1  &  {\bf 91.3}  \\
    \hline \hline 
   External/Reranking/Combo & & & \\
   \hline
   $\dagger$\namecite{henderson:2004} (rerank)        &  89.8  &  90.4  &  90.1  \\
   \namecite{mcclosky2006}           &  92.2  &  92.6  &  92.4  \\
   \namecite{zhu2013} (semi)        &  91.1  &  91.5  &  91.3  \\
   \namecite{huang:2008} (forest)    &        &        & 91.7 \\
   $\dagger$\namecite{vinyals2015} ({\sc wsj})$^\ddagger$\!\!\!     &        &        &  90.5  \\
   $\dagger$\namecite{vinyals2015}   (semi)   &        &        &  92.8  \\
   $\dagger$\namecite{durrett2015neural}$^\ddagger$  &  &  &  91.1 \\
   $\dagger$\namecite{dyer2016recurrent} (gen.~rerank.)\!\!\!\!        &        &        &  92.4  \\

   \hline
\end{tabular}
}
\caption{Comparison of performance of different parsers on PTB test set. $\dagger$Neural. *Greedy. $^\ddagger$External embeddings.}
\label{table:external_comparison_results}
\end{table}

Table~\ref{table:external_comparison_results} compares test our results on PTB to a range of other leading constituency parsers. Despite being a greedy parser, when trained using dynamic oracles with exploration, it achieves the best \Fone score of any closed-set single-model parser.

\subsection{French Treebank}

We also report results on the French treebank, with one small change to network structure. Specifically, we also included morphological features for each word as input to the recurrent network, using a small embedding for each such feature, to demonstrate that our parsing model is able to exploit such additional features. 

We used the predicted morphological features, part-of-speech tags, and lemmas (used in place of word surface forms) supplied with the SPMRL 2014 data set \cite{seddah2014introducing}. It is thus possible that results could be improved further using an integrated or more accurate predictor for those features. Our parsing and evaluation also includes predicting POS tags for multi-word expressions as is the standard practice for the French treebank, though our results are similar whether or not this aspect is included.

\begin{table}[t]
\centering
\resizebox{.5\textwidth}{!}{
\begin{tabular}{| l | c c c |}
   \hline

   Parser  &  LR  &  LP &  \Fone  \\
   \hline
   \namecite{bjorkelund2014introducing}$^{*,\ddagger}$  &  &  & 82.53  \\
   \namecite{durrett2015neural}$^\ddagger$          &  &  &  81.25 \\
   \namecite{coavoux2016acl}             &  &  &  80.56 \\
   \hline
   {\bf static oracle}          &  83.50 & 82.87 & 83.18  \\
   {\bf dynamic/exploration}    &  81.90 & 84.77 & {\bf 83.31}  \\
    \hline

   \hline
\end{tabular}
}
\caption{Results on  French Treebank.
$^*$reranking, $^\ddagger$external.}
\label{table:external_french_results}
\end{table}

We compare our parser with other recent work in Table~\ref{table:external_french_results}. We achieve state-of-the-art results even in comparison to \namecite{bjorkelund2014introducing}, which utilized both external data and reranking in achieving the best results in the SPMRL 2014 shared task.

\subsection{Notes on Experiments}

For these experiments, we performed very little hyperparameter tuning, due to time and resource contraints. We have every reason to believe that performance could be improved still further with such techniques as random restarts, larger hidden layers, external embeddings, and hyperparameter grid search, as demonstrated 
by \namecite{weiss2015google}.

We also note that while our parser is very accurate even with greedy decoding, the model is easily adaptable for beam search, particularly since the parsing system already uses a fixed number of actions. Beam search could also be made considerably more efficient by caching post-hidden-layer feature components for sentence spans, essentially using the precomputation trick described by \namecite{chen2014fast}, but on a per-sentence basis.

\section{Conclusion and Future Work}

We have developed a new transition-based constituency parser which is built around sentence spans. It uses a factored system alternating between structural and label actions. We also describe a fast dynamic oracle for this parser which can determine the optimal set of actions with respect to a gold training tree in an arbitrary state. Using an LSTM model and only a few sentence spans as features, we achieve state-of-the-art accuracy on the Penn Treebank for all parsers without reranking, despite using strictly greedy inference.

In the future, we hope to achieve still better results using beam search, which is relatively straightforward given that the parsing system already uses a fixed number of actions. 
Dynamic programming \cite{huang+sagae:2010}
could be especially powerful in this context given the very simple feature representation used by our parser,
as noted also by \namecite{kiperwasser2016}. 

\section*{Acknowledgments}

We thank the three anonymous reviewers for comments,
Kai Zhao, Lemao Liu, Yoav Goldberg, and Slav Petrov for suggestions,
Juneki Hong for proofreading,
and Maximin Coavoux for sharing their manuscript.
This project was supported in part 
by NSF IIS-1656051, DARPA FA8750-13-2-0041 (DEFT), and 
a Google Faculty Research Award.


\balance
\bibliography{segment_parser_paper}
\bibliographystyle{emnlp2016}

\end{document}